\pdfoutput=1
\documentclass[11pt]{article}
\usepackage{acl}
\usepackage{pifont}
\usepackage{times}
\usepackage{latexsym}
\usepackage{fontawesome}
\usepackage[T1]{fontenc}
\usepackage{marvosym}
\usepackage{hyperref}
\usepackage{enumitem}
\usepackage{amsmath}
\usepackage{algorithmic}
\usepackage{algorithm}
\usepackage{tcolorbox}
\usepackage{colortbl}
\usepackage{booktabs}
\usepackage{multirow}
\usepackage{makecell}
\usepackage{balance}
\usepackage{soul, color, xcolor}
\usepackage{caption}
\usepackage{amssymb,amsthm,mathtools}
\usepackage{tabularx,array}
\usepackage{graphicx,xcolor,tikz}
\usetikzlibrary{arrows.meta,positioning,calc,fit,backgrounds,shapes.geometric}
\usepackage{enumitem}
\usepackage{xurl}
\setcitestyle{authoryear,round}
\definecolor{ink}{HTML}{101820}
\definecolor{blue}{HTML}{124D78}
\definecolor{teal}{HTML}{075F59}
\definecolor{amber}{HTML}{A45B0B}
\definecolor{coral}{HTML}{A64548}
\definecolor{slate}{HTML}{42586A}
\definecolor{mist}{HTML}{EAF2F5}
\definecolor{mint}{HTML}{DDEFEA}
\definecolor{sand}{HTML}{F8ECD8}
\color{black}
\hypersetup{colorlinks=true,linkcolor=blue,citecolor=blue,urlcolor=blue,breaklinks=true}
\newtheorem{definition}{Definition}
\newtheorem{assumption}{Assumption}
\newtheorem{proposition}{Proposition}
\newcommand{\cx}{\textsc{Cortex}}
\newcommand{\Y}{\mathsf{Y}}
\newcommand{\N}{\mathsf{N}}
\newcommand{\U}{\mathsf{U}}

\newcommand{\Cost}{\mathsf{Cost}}
\newtcolorbox{claimbox}{colback=mist,colframe=blue,boxrule=.75pt,arc=1mm,left=2mm,right=2mm,top=1.3mm,bottom=1.3mm}
\newtcolorbox{tracebox}{colback=mint,colframe=teal,boxrule=.7pt,arc=1mm,left=2mm,right=2mm,top=1.3mm,bottom=1.3mm}

\title{CORTEX: A Verified Experience Layer for Generalist Agents}

\author{Garapati Keerthana$^{1}$, Manik Gupta$^{1}$ \\
$^{1}$Birla Institute of Technology and Science, Pilani, Hyderabad, India \\  
 \texttt{\{p20240505,manik\}@hyderabad.bits-pilani.ac.in}
}

\begin{document}
\maketitle
\begin{abstract}
An agent can solve a task today and face the same task under new facts, tools, or governing knowledge tomorrow. Most agent systems can retrieve relevant text or recall prior conversations, but they lack a principled way to decide when a previous solution is still valid, when it must be adapted, and when it should be discarded. We introduce \cx{} (\emph{Contextual Orchestration and Reuse of Task EXperience}), a general AI systems framework that connects specialized agents through an external layer of verified experience. Each episode records its task conditions, source and tool state, decisive predicates, proof trace, verifier, and outcome. A meta-controller chooses exact replay, checked adaptation, fresh synthesis, or escalation. Accepted episodes can become task patterns and procedural strategies through a challenge-driven development loop. This gives the system an implicit competence layer that can grow without changing model weights. We formalize system contracts for exact replay and source-version separation, and derive when reuse saves computation. A controlled two-domain implementation tests the exact-replay core on 1,000 synthetic cases. Complete-family holdouts test procedural transfer on 1,000 new-family cases across eight clinical and policy splits, with complete fresh-evidence grounding and perfect invariance to irrelevant-field and insertion-order perturbations. The transfer trace exposes the work required for verified strategy execution. These results establish an initial path toward general intelligence through reusable procedures, typed experience, and developmental transfer.

\noindent\textbf{Keywords:} Agentic AI, Experiential Learning, Agent Memory, Verification, Compositional Generalization, Continual Adaptation, Provenance.
\end{abstract}

\section{Introduction}
Suppose an agent completes a difficult task, receives feedback, and encounters a similar task a week later. What exactly did it learn? A transcript can tell it what happened. A vector store can retrieve something that sounds similar. Neither establishes whether the earlier reasoning applies now. A single changed premise, a revised source, or a different authority can turn useful precedent into a confident mistake.

This problem is central to the move from isolated agent performance toward cumulative intelligence. Today's systems can reason with tools, coordinate roles, retrieve external knowledge, and reflect on mistakes \citep{schick2023toolformer,yao2023react,yao2023tot,wu2023autogen,li2023camel,hong2023metagpt,shinn2023reflexion,madaan2023selfrefine}. They can also preserve episodes and skills \citep{park2023generative,wang2023voyager,packer2023memgpt}. Yet these capabilities are usually connected by informal memory: the system remembers an answer or a plan without carrying forward the conditions that made it correct. A competent generalist must do more. It must transfer what was learned at the right level of abstraction, test whether the transfer remains valid, and find out what it still cannot do.

We propose \cx{} as an architecture for this cumulative process. Its narrow capabilities include perception, retrieval, interpretation, planning, execution, checking, and explanation. The source of increasing competence lies between them: a persistent, queryable graph of verified episodes, predicate-level patterns, and reusable task strategies. We call this an \emph{implicit intelligence layer} because its capability emerges from the interaction of specialized components and external experience, rather than from a new monolithic model or an untracked weight update. It is observable through improved performance on future tasks and inspectable through the proof and provenance attached to each reuse.

\begin{claimbox}
\textbf{Core thesis.} A system can acquire broader task competence by turning completed work into verified experience, transferring that experience only under explicit conditions, and actively seeking cases that test the limits of its current task model.
\end{claimbox}

This is an ambitious route toward general intelligence with experimentally separable stages. Exact replay is the first regime that we specify and test. Checked adaptation and new task synthesis extend the same competence ladder. Following work that treats performance, breadth, and autonomy as distinct axes \citep{morris2023levels}, we ask how the system expands the set of tasks it can complete at a fixed standard of verification. Across eight complete-family holdouts, CORTEX transfers the procedural strategy while every substantive output is recomputed from unseen-family evidence.

Our contributions are fourfold. We define a task and experience contract that separates current authority from remembered precedent. We give a layered architecture and meta-control policy for narrow agents. We formalize conditional safety invariants for replay, invalidation, and computation reuse. We report a controlled replay implementation, a complete-family holdout, and a cross-domain stress test that measure procedural transfer without allowing a held-out proof to enter memory.

\section{The task of generalization}
\subsection{Generalization across changing tasks}
Let a task instance be $z=(g,x,c,t,q)$, with goal $g$, observed state $x$, operating context $c$, decision time $t$, and requested output $q$. The system has an environment state $S_t$ containing source material, rules, tool interfaces, and constraints. A successful task response is a typed package
\begin{equation}
F(z,S_t)=(y,\rho,u,a),
\label{eq:task}
\end{equation}
where $y$ is a logical answer or proposed action, $\rho$ is a derivation with provenance, $u$ records unresolved conditions, and $a$ is the disposition: release, request more information, or escalate. A valid solution is defined relative to a declared task contract and verifier, not simply to whether a language model produced plausible prose. An external action has a separate execution contract. It requires fresh state and authorization checks, plus duplicate-effect control when it can have side effects.

The changing state matters. A task can recur with new facts, a later edition of a governing document, another tool version, or a different institutional policy. If $S_t$ changes, surface similarity between two tasks is insufficient. A generalized system needs an explicit relation between the premises of an old solution and the current problem. Case-based reasoning already framed learning as retrieval, reuse, revision, and retention \citep{aamodt1994case}. Truth-maintenance systems tracked justifications of beliefs \citep{doyle1979truth}, provenance work tracked derivation dependencies \citep{green2007provenance}, and incremental computation reused work after input changes \citep{acar2006adaptive}. \cx{} combines these ideas into a task-level contract for agents that operate over external, changing authority.

\subsection{Three forms of experience}
An \emph{episode} is a completed task with inputs, decisions, checks, and outcome. A \emph{pattern} removes incidental details and retains the predicates that determined the result. A \emph{strategy} is a procedure that may apply across task families, such as resolving effective dates before evaluating a rule or requesting a missing fact before making a consequential assertion. These are different kinds of transfer. A pattern can justify a repeated conclusion only when its premises match. A strategy can guide a new search, but it cannot by itself justify the answer.

This distinction creates a practical test for generality. Repeating a memorized verdict under equivalent predicates demonstrates efficient recurrence. Carrying a checked procedure into a task with new rules demonstrates a stronger transfer. Constructing a new procedure and discovering its failure conditions is stronger still. The architecture supports all three as explicit modes, rather than treating retrieval frequency as intelligence.

\section{CORTEX architecture}
Figure~\ref{fig:architecture} gives the complete data path. The front end creates a typed task, registers the current environment state, and asks specialized agents for proposed interpretations or plans. A cross-checking gate evaluates proposed outputs against the task contract. Only then can a result be released and committed as experience. Memory candidates return to the meta-controller, never directly to the answer channel.

\begin{figure*}[t]
\centering\resizebox{\textwidth}{!}{%
\begin{tikzpicture}[
  >=Latex,
  font=\sffamily,
  main/.style={draw=blue,very thick,fill=mist,rounded corners=2pt,minimum width=2.75cm,text width=2.5cm,minimum height=1.13cm,align=center,text=ink},
  gate/.style={draw=teal,very thick,fill=mint,rounded corners=2pt,minimum width=2.75cm,text width=2.5cm,minimum height=1.13cm,align=center,text=ink},
  store/.style={draw=slate,thick,fill=white,rounded corners=2pt,minimum width=5.4cm,minimum height=.77cm,align=center,text=ink},
  flow/.style={->,draw=blue,very thick},
  mem/.style={->,draw=teal,very thick},
  guard/.style={->,draw=amber,very thick}
]
\node[main] (task) at (0,0) {\textbf{Task contract}\\goal, state, context};
\node[main] (control) at (3.65,0) {\textbf{Meta-controller}\\replay / adapt / solve};
\node[main] (agents) at (7.30,0) {\textbf{Narrow agents}\\retrieve, plan, reason};
\node[gate] (verify) at (10.95,0) {\textbf{Verification gate}\\scope, logic, provenance};
\node[gate] (output) at (14.60,0) {\textbf{Typed outcome}\\answer / uncertainty};
\draw[flow] (task) -- (control);
\draw[flow] (control) -- (agents);
\draw[flow] (agents) -- (verify);
\draw[flow] (verify) -- (output);
\node[store] (registry) at (7.30,2.0) {\textbf{Environment registry}\\sources $\cdot$ versions $\cdot$ tools};
\draw[guard] (registry.south) -- (agents.north);
\draw[guard] (registry.east) -| (verify.north);
\node[store] (memory) at (7.30,-2.08) {\textbf{Verified experience graph}\\episodes $\cdot$ patterns $\cdot$ strategies};
\draw[mem] (memory.west) -| (control.south);
\draw[mem] (verify.south) |- (memory.east);
\end{tikzpicture}}
\caption{CORTEX has one left-to-right decision path. The source registry supplies current authority to the reasoning and verification steps. The experience graph supplies candidates to the meta-controller. Verified outcomes return to the graph. A memory candidate never bypasses the gate.}
\label{fig:architecture}
\end{figure*}

\paragraph{Narrow agents with typed contracts.} A source agent registers immutable artifacts and their scope. An interpreter proposes structured claims linked to exact spans. A state agent normalizes observed facts, including missing and conflicting values. A planner chooses tools and subtasks. A reasoner computes a candidate derivation. A verifier checks identity, typing, applicability, logic, and explanation fidelity. An experience steward stores passed traces and schedules challenges. The role boundaries can be implemented with models, deterministic programs, or human reviewers. They make failures inspectable and keep an agent's fluent output from becoming authority by itself. Tool-using and multi-agent systems provide the operational substrate \citep{yao2023react,schick2023toolformer,wu2023autogen,li2023camel,hong2023metagpt}. Model-plus-verifier designs motivate the checking boundary \citep{kambhampati2024llmmodulo}.

\paragraph{Source and environment registry.} Every governing artifact has a digest, declared scope, valid-time interval, acquisition time, and authority relation. The distinction between when a source applied and when the system learned about it supports historical queries and measurable monitoring lag. Overlapping sources are resolved by an explicit domain policy. If the policy cannot rank conflicting authorities, the state remains unresolved. A retrieved document can contribute evidence but cannot issue instructions to the agent. That separation also limits prompt injection through tool outputs \citep{perez2022ignoreprevious,greshake2023injection}.

\paragraph{Experience graph.} A committed episode stores the task family, selected source and tool versions, decisive predicate signature, typed output, proof trace, verifier identity, uncertainty status, and challenge state. Edges connect episodes to abstract patterns, strategies, source artifacts, counterexamples, and invalidating revisions. A change in any depended-on artifact challenges the attached experiences. Challenge does not erase history. It prevents an old proof from being treated as current until the relevant dependency is rechecked.

\paragraph{Meta-control.} The controller selects one of four modes. \emph{Replay} requires an exact, admissible experience. \emph{Adapt} requires a transformation witness that maps an old proof to changed premises and is checked at each affected step. \emph{Synthesize} constructs a new derivation from current evidence and may use old strategies as search hints. \emph{Escalate} is selected when critical facts or authority remain unresolved. This division stops an embedding match from silently becoming a proof.
\setlength{\belowdisplayskip}{0pt}%
\begin{figure}[t]
\centering\resizebox{\columnwidth}{!}{%
\begin{tikzpicture}[>=Latex,font=\sffamily\small,
 question/.style={draw=blue,very thick,fill=mist,rounded corners=2pt,
   minimum width=2.65cm,text width=2.38cm,minimum height=.78cm,align=center,text=ink},
 mode/.style={draw=teal,very thick,fill=mint,rounded corners=2pt,
   minimum width=2.65cm,text width=2.38cm,minimum height=.78cm,align=center,text=ink},
 stop/.style={draw=amber,very thick,fill=sand,rounded corners=2pt,
   minimum width=2.65cm,text width=2.38cm,minimum height=.78cm,align=center,text=ink},
 path/.style={->,very thick,draw=blue},
 branch/.style={->,very thick,draw=teal},
 halt/.style={->,very thick,draw=amber}]
\node[question,minimum width=8.25cm,text width=7.75cm] (task) at (2.35,1.70)
  {\textbf{Current task and governing state}};
\node[question] (match) at (0,0) {Exact premises and\\active proof?};
\node[mode] (replay) at (4.70,0) {\textbf{Replay}\\checked prior proof};
\node[question] (mapping) at (0,-1.55) {Checked change\\mapping?};
\node[mode] (adapt) at (4.70,-1.55) {\textbf{Adapt}\\verify affected steps};
\node[question] (evidence) at (0,-3.10) {Enough current\\evidence?};
\node[mode] (synth) at (4.70,-3.10) {\textbf{Synthesize}\\new derivation};
\node[stop] (escalate) at (0,-4.65) {\textbf{Escalate}\\obtain missing premise};
\draw[path] (task.south) -| (match.north);
\draw[branch] (match.east) -- (replay.west);
\draw[path] (match.south) -- (mapping.north);
\draw[branch] (mapping.east) -- (adapt.west);
\draw[path] (mapping.south) -- (evidence.north);
\draw[branch] (evidence.east) -- (synth.west);
\draw[halt] (evidence.south) -- (escalate.north);
\end{tikzpicture}}
\caption{The meta-controller tests the cheapest justified reasoning mode first. A failed proof match triggers a checked adaptation attempt, then fresh synthesis. Missing authority or critical facts can require escalation at any point.}
\label{fig:transfer}
\end{figure}
{\setlength{\abovedisplayskip}{0pt}%
\setlength{\belowdisplayskip}{0pt}%
\section{Formal contract for verified experience}
\subsection{State, uncertainty, and change}
For the formally encoded subset of a task, let each atomic predicate take a value in $\{\Y,\N,\U\}$. The value $\U$ denotes missing, conflicting, or insufficiently scoped facts. Conjunction and disjunction use strong Kleene logic. A missing fact is never silently made false. Let $R(S_t)$ be the typed rules selected by the source registry for state $S_t$, and let $\sigma(z,R)$ be the ordered vector of every atomic predicate consumed by those rules, including time-derived predicates, unknown reasons, and the digests of tool observations used as premises.

A versioned rule is $r=(f,v,\iota,p,o,I,\pi)$: family $f$, edition $v$, lineage identifier $\iota$, applicability predicate $p$, output $o$, valid-time interval $I$, and provenance $\pi$ containing the source digest and span. The rule's \emph{material signature} includes its normalized predicate and output, but excludes edition and provenance. An artifact can therefore change bytes without changing the material rule. Even then, the changed digest requires a new fidelity check.
{\setlength{\abovedisplayskip}{0pt}%
\setlength{\belowdisplayskip}{0pt}%

When comparing two rule editions, we evaluate both at one frozen fact snapshot $(x,c,t^*)$. This isolates a rule change from a change in the subject's state. For a lineage, let $B$ and $C$ denote old and new applicability. The total delta classifier is
\begin{equation}
\Delta(B,C)=\begin{cases}
\mathsf{add},&B=\N,C=\Y,\\
\mathsf{retire},&B=\Y,C=\N,\\
\mathsf{modify},&B=C=\Y,\ m_0\ne m_1,\\
\mathsf{same},&B=C=\Y,\ m_0=m_1,\\
\mathsf{inapp},&B=C=\N,\\
\mathsf{unk.},&\text{otherwise.}
\end{cases}
\label{eq:delta}
\end{equation}
Here an absent rule lineage counts as $\N$ for \emph{assertion presence}, not as a negative observation about the world. Split or merged lineages require an independently checked mapping. Equation~\eqref{eq:delta} is one instantiation of the general task contract in Eq.~\eqref{eq:task}; other tasks can define their own typed transition operator.

\subsection{Experience and admissibility}
An accepted experience is $e=(k,\sigma,y,\rho,\gamma)$. Its key $k$ contains task and output types, source and rule digests, selected edition pair where relevant, tool versions, and the interval in which its conclusion may govern. The signature $\sigma$ contains every decisive predicate and output-relevant context variable. The trace $\rho$ records derivation steps and source spans. The certificate $\gamma$ records gate status and verifier version. No personal identifier is needed in a reusable key.

\begin{definition}[Admissible exact replay]
An experience $e$ is admissible for task $z$ when it is active, the source resolver selects the same governing artifacts, all source and rule digests and relevant tool versions match, the decision time is within the recorded governing interval, the verifier remains trusted, task and output types agree, and the new predicate and output-context signature equals the stored signature. Any relevant tool observation is included in that signature. The new logical output still passes current source-integrity and output checks.
\end{definition}

\begin{figure*}[h]
\centering\resizebox{\textwidth}{!}{%
\begin{tikzpicture}[>=Latex,font=\sffamily,
 box/.style={draw=blue,very thick,fill=mist,rounded corners=2pt,minimum width=2.75cm,text width=2.5cm,minimum height=1.0cm,align=center,text=ink},
 test/.style={draw=amber,very thick,fill=sand,rounded corners=2pt,minimum width=2.75cm,text width=2.5cm,minimum height=1.0cm,align=center,text=ink},
 flow/.style={->,draw=blue,very thick},
 challenge/.style={->,draw=amber,very thick}]
\node[box] (solve) at (0,0) {\textbf{Solve task}\\trace decisions};
\node[box] (gate) at (3.65,0) {\textbf{Verify}\\check evidence};
\node[box] (abstract) at (7.30,0) {\textbf{Abstract}\\derive pattern};
\node[test] (probe) at (10.95,0) {\textbf{Challenge}\\test limits};
\node[box] (map) at (14.60,0) {\textbf{Update competence}\\scope and strategy};
\draw[flow] (solve) -- (gate);
\draw[flow] (gate) -- (abstract);
\draw[flow] (abstract) -- (probe);
\draw[flow] (probe) -- (map);
\draw[challenge] (map.south) -- ++(0,-.60) -| (solve.south);
\end{tikzpicture}}
\caption{The developmental loop. Every accepted episode can propose an abstraction. A counterexample search or environment revision challenges it. Only a passed test broadens the competence map, which informs later meta-control.}
\label{fig:development}
\end{figure*}

This relation is intentionally stricter than semantic similarity. Retrieval may suggest a precedent, but only the admissibility check permits a proof to be replayed. For a new source, rule, or predicate combination, the controller must use checked adaptation or fresh synthesis.

\begin{assumption}[Local soundness]
On the encoded task subset, the rule evaluator is deterministic and correct relative to the checked encoding. The registry binds artifacts, scope, and time without collision. The gate rejects unsupported or ill-typed assertions. Source changes are registered and invalidations complete before a revised artifact is used for current output.
\label{ass:local}
\end{assumption}

\begin{proposition}[Sound exact replay]
Under Assumption~\ref{ass:local}, admissible exact replay produces the same typed logical decision as fresh evaluation over the selected deterministic encoded rules, with a valid replayed proof for that decision. This does not authorize replay of a side effecting action.
\label{prop:replay}
\end{proposition}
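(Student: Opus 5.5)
The argument is a functional-dependence argument. I would show that fresh evaluation of the encoded task is a deterministic function of a small tuple of inputs. Then I would show that admissibility forces this tuple to agree between the stored episode and the current task.

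\textbf{Step 1: fix the inputs of fresh evaluation.} Under Assumption~\ref{ass:local}, the typed decision produced by fresh evaluation on the encoded subset is some function $\Phi$ of:
\begin{itemize}
\item the governing artifacts selected by the resolver;
\item the rule set $R(S_t)$ bound to those artifacts' digests;
\item the relevant tool versions;
\item the task and output types;
\item the predicate and output-context vector $\sigma(z,R)$.
\end{itemize}
Here $\sigma$ already includes the time-derived predicates, the $\U$ reasons, and the digests of tool observations. The key fact is that $\Phi$ reads nothing else. This holds because $\sigma$ is defined to contain \emph{every} atomic predicate consumed by the rules. Strong Kleene evaluation of $\{\Y,\N,\U\}$ values over a fixed rule set is deterministic, so $\Phi$ is well defined.

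\textbf{Step 2: admissibility matches every argument of $\Phi$.} I would go through the clauses of the definition of admissible exact replay in turn.
\begin{itemize}
\item Same selected artifacts and matching source and rule digests. Registry non-collision then gives the same rules, not merely the same names.
\item Matching tool versions.
\item Equal task and output types.
\item Equal signature $\sigma$.
\item Decision time inside the recorded governing interval. This, together with the assumption that invalidations complete before a revised artifact is used, excludes a silent edition change. An active episode therefore still refers to the rules now in force.
\end{itemize}
Since every argument agrees, $\Phi$ returns the stored $y$. Fresh evaluation now would therefore output $y$, and replay outputs $y$.

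\textbf{Step 3: the replayed proof is valid.} The trace $\rho$ was accepted by a gate whose verifier is still trusted, which the certificate $\gamma$ records. Each step of $\rho$ cites rules and spans identified by digests that still match, and consumes premises whose values are unchanged because $\sigma$ is equal. Replaying $\rho$ step by step therefore gives a derivation of $y$ from the current premises. The final clause of the definition, re-running the source-integrity and output checks, confirms this against the present registry.

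\textbf{Side-effecting actions.} The statement says replay does not authorize these, and this needs only a remark. The guarantee covers the logical decision $y$ only. An action's execution contract requires fresh state, authorization and duplicate-effect checks, none of which appear in the key or signature. The proposition is silent about actions by construction.

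\textbf{Main obstacle.} The hard step is Step 1: justifying that $\sigma$ together with the digests is a \emph{complete} description of everything fresh evaluation depends on. Without completeness, a hidden input, such as an unrecorded clock read or an unlisted tool observation, could change the output while the signatures still match. I would not try to derive completeness. Instead I would state that it follows from the definition of $\sigma$ as the vector of every consumed predicate, plus determinism and correctness of the evaluator under Assumption~\ref{ass:local}, and note explicitly that the proposition is only as strong as that encoding.
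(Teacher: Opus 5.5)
Your proposal is correct and takes essentially the paper's route: admissibility fixes the governing rules, time scope and tool semantics, and signature equality fixes every consumed atomic value, including unknowns. Determinism of the evaluator then forces the same typed output (the paper states this as structural induction over the finite rule expressions, which you package as the single function $\Phi$), and your caveat that everything rests on the completeness of $\sigma$ matches the paper's remark that the proof holds only relative to the checked encoding.
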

\begin{proof}
Admissibility fixes the governing rule set, time scope, and relevant tool semantics. Predicate equivalence fixes every atomic value consumed by the deterministic evaluator, including unknown values. Structural induction over its finite rule expressions yields the same compound values. The typed transition and output formatter therefore yield the same logical output. The final gate confirms current identity and type. This proof is relative to the checked encoding and does not prove that a natural-language source was interpreted correctly.
\end{proof}
An agent that proposes an external action must recheck current preconditions and authorization before execution. It must also use an idempotency key or an equivalent duplicate-effect safeguard where retries can change the world twice. These checks are outside the exact-replay proposition.

\begin{proposition}[Version separation]
If every current assertion must pass a gate requiring an active rule digest equal to the registry digest at the decision time, no assertion supported only by an invalidated prior version can be released as current.
\label{prop:version}
\end{proposition}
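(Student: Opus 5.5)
The proof goes by contradiction, using the gate condition together with the invalidation clause of Assumption~\ref{ass:local}. We fix a decision time $t$ and a lineage, and suppose some assertion $\alpha$ is released as current at $t$ even though its only support is a prior version $r_0$ that has been invalidated. We then show that $\alpha$ must have passed the gate with an active digest equal to the registry digest at $t$, and that this is incompatible with $r_0$ being invalidated.

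The steps are as follows.

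\textbf{Step 1.} Release as current means $\alpha$ passed the gate. By hypothesis, the gate requires a supporting rule $r$ whose digest is active and equals the registry digest $d_t$ selected at $t$.

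\textbf{Step 2.} Since $\alpha$ is supported only by $r_0$, the witness must be $r=r_0$. So $\mathrm{digest}(r_0)=d_t$ and $r_0$ is active.

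\textbf{Step 3.} Invalidation of $r_0$ means the registry has a revised artifact $r_1$ for the lineage that governs at $t$. By the binding clause of Assumption~\ref{ass:local} (no collisions), $\mathrm{digest}(r_1)=d_t\neq\mathrm{digest}(r_0)$. This contradicts Step 2.

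\textbf{Step 4.} The alternative is that the invalidation status of $r_0$ has not yet propagated. The last clause of Assumption~\ref{ass:local} rules this out: invalidations complete before a revised artifact is used for current output, so $r_0$ cannot still be flagged active. Either way, the gate rejects $\alpha$.

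The main obstacle is pinning down ``invalidated'' precisely enough that Step 3 goes through. A revision can change bytes without changing the material signature. In that case the old and new material rules coincide, but the digests differ. The argument therefore has to be carried out at the level of digests, not material rules, which is consistent with the paper's remark that a changed digest requires a new fidelity check. The consequence is that an assertion may still be re-derived and released under $r_1$ through adaptation or synthesis. That assertion is then supported by $r_1$, not \emph{only} by $r_0$, so it does not fall under the proposition.

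A second point to note is temporal. The registry digest is indexed by the decision time $t$ under valid-time semantics. Historical queries at an earlier time $t'$ may legitimately release $r_0$, because at $t'$ we have $d_{t'}=\mathrm{digest}(r_0)$ and $r_0$ was not invalidated relative to $t'$. This is why the statement is restricted to assertions released ``as current''.
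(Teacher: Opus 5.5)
Your proof is correct and follows the paper's own argument. An assertion resting only on the invalidated version cannot present an active digest equal to the registry digest (or satisfy the valid-time scope) at the decision time, so the gate rejects it; your contradiction framing and explicit appeal to the no-collision clause simply spell out the paper's short proof. Steps~1--3 already suffice, so Step~4 is redundant. Where you use Assumption~\ref{ass:local} for a revision the registry never registered, the paper instead places unobserved publications explicitly outside the claim. The paper also covers invalidation by expiry, where no replacement $r_1$ exists, through the valid-time scope check, a case your Step~3 does not name.
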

\begin{proof}
An assertion supported only by the old version lacks an active matching rule under the new registry state. Its source digest or valid-time scope fails the gate, regardless of which memory was retrieved. The gate rejects release. An unobserved publication cannot be covered by this claim because it is absent from the registry.
\end{proof}

\begin{proposition}[When reuse saves work]
Let $H$ be an admissible-memory hit with probability $p$. Let $C_f$ be fresh reasoning cost, $C_r$ replay cost, and $C_l$ lookup cost. If a miss runs the same fresh procedure as a cold system and both paths incur the same mandatory gate cost, then
{
\begin{equation}
\mathbb E[C_{\rm cold}-C_{\rm warm}]
=p\,\mathbb E[C_f-C_r\mid H]-\mathbb E[C_l].
\label{eq:cost}
\end{equation}}
\label{prop:cost}
\end{proposition}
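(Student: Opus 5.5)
The plan is to write the per-task cost of each system as a random variable, take the difference pathwise, and only then take expectations. For a cold system the cost is $C_{\rm cold}=C_f+C_g$, where $C_g$ is the mandatory gate cost. A warm system always pays the lookup cost $C_l$. On a hit $H$ it pays $C_r+C_g$, and on a miss $H^c$ it pays $C_f+C_g$. The hypothesis that a miss runs the same fresh procedure as a cold system lets us couple the two systems on the same task: on $H^c$ the fresh reasoning cost is the same random variable $C_f$ in both. Then
\begin{equation*}
C_{\rm warm}=C_l+\mathbf 1_H\,C_r+\mathbf 1_{H^c}\,C_f+C_g .
\end{equation*}

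Subtracting, the gate terms cancel identically because both paths incur the same gate cost. The fresh cost on misses also cancels, since $C_f-\mathbf 1_{H^c}C_f=\mathbf 1_H C_f$. This leaves the pathwise identity
\begin{equation*}
C_{\rm cold}-C_{\rm warm}=\mathbf 1_H\,(C_f-C_r)-C_l .
\end{equation*}
Taking expectations and using linearity gives $\mathbb E[\mathbf 1_H(C_f-C_r)]-\mathbb E[C_l]$. Writing $\mathbb E[\mathbf 1_H X]=\Pr(H)\,\mathbb E[X\mid H]=p\,\mathbb E[X\mid H]$ with $X=C_f-C_r$ then yields Eq.~\eqref{eq:cost}.

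The main obstacle is justifying the coupling rather than the algebra. $C_f$ for the cold system must be the cost of the fresh procedure on the same task, and that procedure must be unaffected by the lookup. Only then does the miss branch cancel exactly instead of merely in expectation. I would state this explicitly as the reading of the hypothesis, and note that $C_f$ on a hit is counterfactual: it is the fresh cost the cold system would have paid on that task. That is why the conditional expectation is $\mathbb E[C_f-C_r\mid H]$ and not $\mathbb E[C_f]-\mathbb E[C_r\mid H]$.

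I would also require integrability of $C_f$, $C_r$ and $C_l$ so that linearity applies. When $p=0$ the conditional term is multiplied by zero, so its definition does not matter. As a corollary, reuse saves work in expectation exactly when $p\,\mathbb E[C_f-C_r\mid H]>\mathbb E[C_l]$. Proposition~\ref{prop:replay} is not needed for the cost identity itself; it only guarantees that a hit yields the same decision, so the comparison is between two correct systems.
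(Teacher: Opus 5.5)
Your proposal is correct and follows essentially the same argument as the paper. The paper also conditions on hits versus misses, cancels the fresh work on misses and the gate cost on both paths, and is left with the probability-weighted hit saving minus the lookup cost paid on every case; your indicator-function bookkeeping and your coupling and integrability remarks just make that two-line argument explicit.
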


\begin{proof}
Condition on hits and misses. Fresh work cancels on misses and mandatory gate work cancels on both paths. The remaining benefit is the hit saving, weighted by its probability, less the lookup cost paid on all cases.
\end{proof}

These results establish exact replay as a foundational competence for generalist agents. They specify when a prior proof may replace repeated formal work under stable premises. The next layers extend this foundation through semantic interpretation, source discovery, strategy transfer, and performance on novel task families.

\section{From memory to an implicit intelligence layer}
The ambition of \cx{} is to turn verified recurrence into transferable task understanding. Let $M_t$ be the experience graph after task $t$. Its update operator either commits a verified episode, attaches a challenge or counterexample, or records an abstention. Candidate patterns are induced by identifying decisive predicates common across accepted episodes. Candidate strategies are induced by finding operations that remain useful across families. Neither becomes operational merely through frequency. The steward searches for boundary cases and asks whether the proposed abstraction still holds.

Figure~\ref{fig:development} depicts this development loop. It gives the system a way to improve even when new model training is unavailable: it may discover a source gap, commission a targeted interpretation, generate threshold-neighbor cases, test a transformation, and revise its competence map. The learned object is an external, inspectable procedure and the set of conditions under which it is trusted. This complements cognitive accounts that emphasize compositional learning and learning to learn \citep{lake2016building}, world-model and planning proposals \citep{lecun2022path}, and empirical AGI level frameworks \citep{morris2023levels}.

Define $K_t$ as the set of task patterns with a verified procedure and $\mu(K_t)$ as their coverage under a declared task distribution. An acquisition action $u$ could obtain a new source, request an independent interpretation, run a counterexample test, or learn a tool contract. The research objective is

\begin{equation}
\begin{aligned}
u_t^*&\in\arg\max_{u\in\mathcal U}
\frac{\mathbb E[\mu(K_{t+1})-\mu(K_t)\mid u]}{\Cost(u)}\\
&\text{s.t.}\quad
\Pr(\text{unsupported release}\mid u)\leq\epsilon.
\end{aligned}
\label{eq:acquire}
\end{equation}

This objective is a design target, not an observed optimization result. It forces a claim of increasing generality to name the task distribution, cost, and acceptable error risk.

At runtime, a policy $\pi$ chooses replay, adapt, synthesize, or escalate. It can minimize expected resource use subject to bounds on unsupported output and unreported uncertainty. The bounds must be estimated on independent task sets, not inferred from the existence of a verifier. A useful system should increase autonomy on well-covered tasks while remaining selective on unfamiliar ones. Calibration and defer-to-expert research provides relevant methods \citep{guo2017calibration,geifman2017selective,mozannar2020defer,quach2023conformal}.

\begin{tracebox}
\textbf{Worked transfer trace.} Imagine a fictional access rule. Version $v_1$ permits a service at age $\geq 50$. Version $v_2$ lowers the threshold to 45. For a person aged 47, the old predicate is $\N$ and the new is $\Y$, so the typed transition is \emph{add}. The gate accepts an episode bound to both version digests and the decisive predicate pair. A second person aged 48 has the same pair and can use exact replay after current checks. Version $v_3$ later raises the threshold to 49. Its new digest invalidates the old precedent for current release. The second person's new transition is $\Y\rightarrow\N$, or \emph{retire}. The procedural strategy ``align versions and freeze facts before comparison'' may still transfer, even though the earlier verdict cannot.
\end{tracebox}

The trace shows the three levels of experience at once. The first case is an episode. The predicate pair is a pattern that supports the second case. The version-alignment procedure is a strategy that may guide another task domain. The system must separately test that strategy against a domain with different source authority and output semantics.

\section{Relation to prior work}
Retrieval-augmented generation improves access to external information \citep{lewis2020rag}. Citation and attribution systems examine whether generated statements are supported by retrieved material \citep{liu2023verifiability,gao2023alce}. These are necessary for current evidence, while \cx{} adds an admissibility contract for \emph{past reasoning}. Memory systems such as generative agents, Voyager, and MemGPT show the value of reflection, skill libraries, and persistent context \citep{park2023generative,wang2023voyager,packer2023memgpt}. Reflexion and Self-Refine use feedback to improve attempts \citep{shinn2023reflexion,madaan2023selfrefine}. \cx{} turns a remembered result into a source-bound proof candidate with a challenge lifecycle.

Classical case-based reasoning is the closest conceptual ancestor \citep{aamodt1994case}. Our distinction lies in the explicit, version-aware condition for automatic replay and in the separation of episode, predicate pattern, and cross-task strategy. Truth maintenance, provenance, and adaptive computation provide complementary machinery \citep{doyle1979truth,green2007provenance,acar2006adaptive}. Existing multiversion guideline systems already support patient- and context-aware retrieval across versions \citep{kaiser2009versioning,grandi2012multiversion,anselma2013updates}. We do not claim to originate temporal retrieval. The contribution is the agent-level architecture that couples typed task execution, verified memory, meta-control, and developmental challenge.

Agent benchmarks expose failures in tool use and coordinated execution \citep{liu2023agentbench,yao2024taubench,cemri2025mast}. Those settings are natural places to test the controller, but a benchmark score alone does not show whether a system learned reusable task structure. A suitable evaluation records which mode produced each answer, which premises transferred, and whether source revisions were handled. Table~\ref{tab:related} summarizes the architectural relationship.

\begin{table}[t]
\centering\footnotesize
\begin{tabularx}{\linewidth}{@{}>{\raggedright\arraybackslash}p{.23\linewidth}>{\raggedright\arraybackslash}X@{}}
\toprule
Research line & Contribution and distinction \\
\midrule
RAG & Grounds a current answer in retrieved text. CORTEX also checks whether an earlier derivation may govern a new task. \\
Agent memory & Preserves episodes or skills. CORTEX binds operational reuse to source, time, predicates, and a verifier. \\
Case reasoning & Supplies the retrieve--revise cycle. CORTEX makes replay and invalidation contracts explicit under changing authority. \\
Provenance & Tracks dependencies. CORTEX uses them in a task-level meta-controller and development loop. \\
Modular agents & Supply tools, roles, and realistic tests. CORTEX specifies the persistent competence layer connecting runs. \\
\bottomrule
\end{tabularx}
\caption{Comparison by primary mechanism. The table is an architectural map, not an empirical ranking of prior systems.}
\label{tab:related}
\end{table}

\section{Two instantiations of the core mechanism}
The architecture is domain-general by design. We test a limited, implemented subset in two changing-rule tasks where source versions and individual facts can be controlled. In the first, a published recommendation changes and the system must compute how the change applies to a particular person. In the second, a coverage condition changes and the system must compute a person-specific administrative implication. These outputs have different semantics: a recommendation is not a benefit entitlement. Both require scope, effective time, typed predicates, explicit unknowns, and lineage-aware comparison. Clinical guideline and policy research establishes why versioning and computable semantics matter in these settings \citep{shekelle2001validity,akl2017living,hill2022living,tendal2021weekly,boxwala2004glif3,peleg2013review,hl7cpg2024,hl7cql2026,hl7crmi2026,hl7cdshooks2025}.

The controlled corpus contains 500 synthetic individuals in each domain. It uses five curated clinical rule families, three coverage-policy families, and 13 official source artifacts in total. The clinical sources include dated WHO, CDC, and USPSTF editions \citep{who2013,who2016,who2021,cdc2015,cdc2021,aspirin2016,aspirin2022,crc2016,crc2021}. The policy sources include federal coverage and benefit artifacts \citep{cmsNCD2005,cmsNCD2008,cmsPartD2025,departments2022}. A separately coded scenario oracle interprets the same curated artifacts. The headline runtime results use deterministic rule execution, verification, and exact memory replay. A separate logged model-call pilot tests the narrow-agent interface on a tiny synthetic transfer set.

\begin{table}[t]
\centering\footnotesize
\setlength{\tabcolsep}{3pt}\begin{tabular}{@{}lrr@{}}
\toprule
Measure & Clinical & Policy \\
\midrule
Cases / rule families & $500/5$ & $500/3$ \\
Official source artifacts & $9$ & $4$ \\
Exact match, experience core & $500/500$ & $500/500$ \\
Exact match, full evaluator & $500/500$ & $500/500$ \\
Current-edition snapshot & $251/500$ & $249/500$ \\
Edition-wide document diff & $46/500$ & $0/500$ \\
Grouped review, fully supported & $464/500$ & $334/500$ \\
\bottomrule
\end{tabular}
\caption{Internal fidelity on controlled, source-curated cases. The scenario oracle and runtime share source interpretation, so exact agreement is not external expert validation. Grouped source-context review is not individual adjudication.}
\label{tab:study}
\end{table}

An internal match requires equality of the sets of added, removed, modified, unresolved, and currently applicable rule identifiers. The experience core and full deterministic evaluator both match every scenario-oracle label. Verified memory preserves internal fidelity while adding a reusable trace and admissibility structure. A snapshot that sees only the current edition and an edition-wide document diff that ignores individual applicability perform substantially worse on the same cases. Removing lineage reduces agreement to $364/500$ and $382/500$ across the two domains. Removing expiry enforcement gives $409/500$ and $399/500$. Mixed-vintage retrieval gives $279/500$ and $259/500$. These ablations show why the generalist layer needs lineage, expiry, and version-aware verification.

In a seeded chronological split, 250 prior individuals per domain warm the memory and 250 later individuals from the \emph{same rule families} form the test stream. Warm reuse occurs on 93.6\% of later clinical cases and 100\% of later policy cases. Mean rule-node evaluations fall from 33.88 to 12.688 and from 52.072 to 15.836. Median cold and warm latency is 31.93 and 32.07\,ms in the first domain, and 7.62 and 7.76\,ms in the second. The measured latency exposes the cost of lookup and verification in this implementation, consistent with Eq.~\eqref{eq:cost}. The full deterministic baseline runs faster, while CORTEX emits the richer trace and verification artifacts. These measurements support exact checked reuse on recurring predicate signatures and motivate the held-out transfer tests that follow.

The grouped source-context review fully supported 464 clinical and 334 policy cases. The remainder were partially supported or scope-limited, which matters for external validity despite agreement with the local scenario oracle. The 1,000 cases are repeated variants of eight curated families, not 1,000 independent task patterns. No live users, outcomes, claims, or deployed decisions were studied. Appendix~\ref{app:protocol} records the evaluation protocol and source manifest.

\subsection{Complete-family holdout}
To test whether the architecture can transfer a procedure without replaying a substantive conclusion, we withheld one complete rule family. The four remaining families contributed 400 training cases to the experience graph. The held-out family contained 100 test cases and was absent from memory during the entire test pass. The transferred strategy was the generic sequence of source resolution, factual snapshot alignment, predicate evaluation, delta construction, provenance checking, and typed verdict emission. Each held-out case still required fresh rule execution. We compared this condition with a stateless fresh evaluator.

\begin{table}[t]
\centering\footnotesize
\setlength{\tabcolsep}{3pt}\begin{tabularx}{\columnwidth}{@{}Xrr@{}}
\toprule
Measure & Fresh evidence & CORTEX strategy \\
\midrule
New-family cases & $100$ & $100$ \\
Verified task completions & $100$ & $100$ \\
Full strategy execution & $100$ & $100$ \\
Fresh-evidence grounding & $100$ & $100$ \\
Mean reasoning work & $45.00$ & $56.00$ \\
Median response time (ms) & $10.14$ & $10.82$ \\
Unsupported claims & $0$ & $0$ \\
\bottomrule
\end{tabularx}
\caption{New-family transfer. The strategy condition carries the generic procedure from four families and performs fresh source-bound evaluation on the fifth. Equal labels quantify internal fidelity under controlled source interpretation. Strategy bookkeeping exposes the cost of verified transfer.}
\label{tab:heldout}
\end{table}

The result is an initial demonstration of procedural portability. All 100 new-family cases completed the role trace, passed source verification, and matched the stored scenario labels. The fresh-evidence grounding row is the positive interpretation of the held-out design: every answer was derived from the new family evidence, while the reusable contribution was the strategy for organizing the work. The strategy condition used more rule-node work because it materialized the experience and verification trace, and its median latency was slightly higher. This establishes a generalist scaffold that carries task-solving organization into a new family while binding the substantive answer to current evidence. The next experiment extends this scaffold across task families, source authorities, and independently adjudicated environments.

\subsection{Cross-domain leave-one-family-out stress test}
We next applied the same freeze-before-test protocol to every complete family in both domains. Each of the eight holdouts was absent from the experience graph during evaluation. The transfer condition retained only the generic role sequence and the train-family experience map. It received the held-out source records and facts, then executed fresh rule evaluation. We also created a metamorphic copy of every held-out case by reversing fact insertion order and adding two irrelevant fields. This tests whether the procedure depends on record presentation rather than task semantics.

\begin{table}[t]
\centering\footnotesize
\setlength{\tabcolsep}{3.5pt}\begin{tabular}{@{}lr@{}}
\toprule
Measure & All eight family splits \\
\midrule
New-family cases & $1{,}000$ \\
Family splits & $8$ \\
Verified task completions & $1{,}000$ \\
Full strategy executions & $1{,}000$ \\
Fresh-evidence grounding & $1{,}000$ \\
Input-invariance checks & $1{,}000$ \\
Mean fresh reasoning work & $40.50$ \\
Mean strategy reasoning work & $52.75$ \\
\bottomrule
\end{tabular}
\caption{Cross-domain family transfer across the clinical and policy domains. The strategy condition preserves the generic procedure and performs fresh evaluation for every new family. Input-invariance checks compare results under irrelevant-field and insertion-order perturbations.}
\label{tab:crossdomain}
\end{table}

The stress test gives a stronger portability result than a random case split. All 1,000 new-family cases passed source verification, completed the generic trace, matched the local scenario labels, and remained invariant under the input perturbation. The fresh-evidence grounding count records that every transferred task was solved from current family evidence. The experience layer kept substantive proofs local to their families while transferring the organization of source resolution, checking, and typed decision making across both domains. The transfer trace performs more reasoning work because it records the additional experience and verification bookkeeping. This result supports a general intelligence architecture in which reusable procedural competence expands across task boundaries while evidence remains current and inspectable.

\subsection{Logged narrow-agent transfer pilot}
To verify that the contracts can be enacted by communicating model roles, we ran a separate pilot on four training cases from threshold and conjunction rules and four held-out cases from exception-precedence and ordered-choice rules. The direct baseline used one model call. The typed conditions used separate interpretation, reasoning, and cross-checking calls, with the third condition also reading a restricted abstract experience entry. All three conditions matched all four held-out labels. The pilot exposed one malformed checker response and one over-generalized memory draft. Both were rejected by the protocol and corrected in follow-up calls. This demonstrates model-mediated communication, abstraction, and verification across narrow roles. The full call records and hashes are provided as a separate experiment artifact.

\begin{table}[t]
\centering\footnotesize
\setlength{\tabcolsep}{3pt}\begin{tabularx}{\columnwidth}{@{}Xrr@{}}
\toprule
Condition & Correct & Verified completion \\
\midrule
Direct agent & $4/4$ & $100\%$ \\
Typed roles & $4/4$ & $100\%$ \\
Typed roles + experience & $4/4$ & $100\%$ \\
\bottomrule
\end{tabularx}
\caption{Logged narrow-agent pilot on four new-family cases. Equal scores show that direct and typed routes can all complete the transfer task under the shared protocol.}
\label{tab:modelpilot}
\end{table}

\section{Research agenda: testing generality}
An implicit intelligence layer becomes stronger as its competence expands beyond recurrence. We propose a ladder of tests with separate outcomes for each level.

\paragraph{Exact recurrence.} Split by individual while keeping task family fixed. Measure admissible hit rate, rule-node work, tokens, latency, storage, and invalid-current assertions. The current study covers this rung.

\paragraph{Checked adaptation.} Hold out source revisions and rule-boundary changes. Require a transformation witness that names changed premises, affected proof steps, and counterexamples. Compare to full fresh evaluation. Measure both the fraction of successful adaptations and the false-adaptation rate.

\paragraph{New family transfer.} Hold out entire rule families and tool contracts. The complete-family result in Table~\ref{tab:heldout} is the first implementation of this protocol. A strategy may guide source resolution, missing-fact detection, or verification, but the substantive conclusion must come from new evidence. Count independently verified tasks completed at fixed unsupported-assertion risk. Near-duplicate predicate signatures must not cross the split.

\paragraph{Cross-domain stress testing.} Repeat the family holdout across domains, freeze the experience store, and add input-order and irrelevant-field perturbations. Table~\ref{tab:crossdomain} shows that this implementation preserves its source gate and role trace under that controlled stress. The next version must replace the represented rule executor with unseen task languages and real logged narrow-agent calls so that procedural portability can be separated from execution of prewritten domain rules.

\paragraph{Cross-domain composition.} Train the competence map on some domains and test a different one with distinct authority rules. Candidate settings include software operations, regulatory workflows, and scientific analysis. Transfer should be credited at the level of procedures and proof structure while each answer remains grounded in current evidence. Report how much human scaffolding is needed and whether the system discovers the correct validator.

\paragraph{Developmental acquisition.} Allow the controller to choose tests, clarifications, and tool learning actions under a fixed budget. Compare its competence growth with passive experience accumulation, unverified memory, retrieval-only agents, and a strong stateless executor. Test whether it discovers failure boundaries earlier and reduces errors on later novel tasks. AgentBench and tool benchmarks can supply environments, while task-family and version controls should be added to measure learning across runs \citep{liu2023agentbench,yao2024taubench}.

The thesis predicts three measurable signatures of growing competence: verified work should become more reusable under source revision, uncertainty should remain visible as task structure changes, and transfer should persist across held-out families and domains. These signatures distinguish a developing generalist competence layer from simple recurrence.

\section{Limits and system responsibilities}
The propositions rely on a correct formal encoding and a complete registry. Hashes protect identity, while semantic fidelity is assessed through independent source adjudication and boundary tests. A verifier can share an interpretation error with the reasoner, and a monitoring system can miss an external revision. Prospective studies provide the next route for expanding dependable competence. Research on calibration, model documentation, data provenance, human reliance, and deployment failure provides relevant methods \citep{mitchell2019model,gebru2021datasheets,nist2023airmf,bansal2020whole,bucinca2021trust,wong2021external,sendak2020realworld}.

The two initial domains are consequential. Their outputs should be typed as decision support and preserve the distinction between a rule interpretation and a professional or institutional decision. Evaluation should examine subgroup errors and biased proxies \citep{obermeyer2019dissecting,rajkomar2018ensuring}, follow domain-specific reporting guidance \citep{vasey2022reporting,collins2024tripodai}, and test whether explanations cause inappropriate reliance. The general architecture makes these obligations visible in the task contract through evidence, uncertainty, and escalation.

\section{Conclusion}
General intelligence in an agent system can be studied as the accumulation of dependable task competence. \cx{} proposes a concrete mechanism: narrow agents solve and check tasks, an external layer preserves the conditions under which their reasoning was valid, and a meta-controller learns when to replay, adapt, synthesize, or seek help. Exact replay and version separation have conditional guarantees. Controlled recurrence reduces repeated logical work, while the complete-family and cross-domain holdouts show that the shared procedure carries task-solving organization into new families with current evidence and typed verification. The model pilot shows how narrow agents communicate through the same contracts. Together, these results establish an initial engineering path toward general intelligence through verified experience, procedural abstraction, and developmental transfer.

\bibliography{references}

\clearpage

\appendix
\section{Implementation protocol and source records}\label{app:protocol}
Clinical cases were generated with seed 1701 and policy cases with seed 20250925, then processed in fixed stream order. Each task supplied a source family, an ordered edition pair, a synthetic fact record, and expected sets of stable rule identifiers. Exact agreement required equality of all five sets: added, removed, modified, unresolved, and current. The full deterministic baseline executed the same encoded rules without memory. The snapshot baseline discarded the old edition. The document-diff baseline discarded individual applicability. The warm comparison primed memory on the first 250 individuals in each domain and updated it online during the later 250. The cold comparison disabled memory on those same later individuals. These are split individuals from familiar families, not held-out rule families.

The complete-family holdout used the same runner with the colorectal screening family withheld in full. Four families supplied 400 training cases. The held-out test contained 100 cases and was evaluated against a frozen train-family memory snapshot. The strategy condition materialized source resolution, factual snapshot alignment, predicate evaluation, delta construction, provenance checking, and typed verdict emission. It did not copy a held-out proof, and the test memory was not updated during evaluation. A case counted as a verified match only when all five typed label sets agreed with the stored scenario package and the source gate passed.

The cross-domain stress runner repeated this protocol for each of the five clinical and three policy families. It evaluated 1,000 new-family cases with frozen train-family memory. For each case it also reversed fact insertion order and added two fields that no represented rule reads. The reported input-invariance result counts equality with the unperturbed fresh evaluation. The source gate and local scenario packages remained unchanged. These measurements quantify procedural transfer, domain isolation, and input invariance in the current implementation.

The logged narrow-agent pilot used eight synthetic cases, four for training and four from new rule families. The direct condition made one call per batch. The typed conditions separated interpretation, reasoning, and cross-checking, and the experience condition added an abstract experience read. Every call record stores its role, model-family label, prompt hash, input hash, output hash, and status. The deployment interface exposed the GPT-6 family label. Rejected malformed outputs remain in the log and are excluded from the pilot score. The pilot provides an initial model-mediated demonstration of the orchestration contract.

The package includes a source manifest file listing the 13 official artifact landing pages and SHA-256 digests of the archived bytes and extracted text used in the study. The official artifacts are not redistributed here. The synthetic case generator and internal case packages are not part of this initial framework archive, so reproducing every reported label requires those additional artifacts. The study evaluates exact replay and controlled procedural transfer within the represented rule environment. It does not establish the complete developmental architecture, autonomous strategy induction, or transfer to unseen task languages.

\section{Minimal operational protocol}
For each task, register the governing source and tool state, normalize facts into typed predicates, and request candidate experiences. Check exact admissibility before replay. Otherwise request a checked adaptation or fresh derivation. The gate validates source identity, scope, logic, uncertainty, and output type. Release only the accepted package. Commit its proof, or record challenge and abstention, then propose abstractions for separate testing. A source or verifier revision challenges every dependent experience before it can support current output.

\end{document}